\documentclass[letterpaper]{article}
\usepackage{aaai2027}
\usepackage[hyphens]{url}
\usepackage{natbib}
\usepackage{amsmath,amssymb,amsthm,mathtools}
\usepackage{booktabs}
\newtheorem{theorem}{Theorem}
\newtheorem{lemma}[theorem]{Lemma}
\newtheorem{corollary}[theorem]{Corollary}
\newtheorem{proposition}[theorem]{Proposition}
\newtheorem{definition}{Definition}
 
\newcommand{\E}{\mathbb{E}}
\newcommand{\Cadapt}{\mathcal{C}_{\mathrm{adapt}}}
\newcommand{\Roracle}{R^{\star}}
\newcommand{\Pp}{\mathbb{P}}
\newcommand{\Aad}{\mathcal{A}_{\mathrm{ad}}}
\newcommand{\Aoracle}{\mathcal{A}_{\mathrm{or}}}
\newcommand{\Holder}{\mathcal{H}}
\newcommand{\TV}{\operatorname{TV}}
\newcommand{\KL}{\operatorname{KL}}
 
\title{The Cost of Adaptivity:\\
Matching Lower Bounds Across Learning Problems}
\newcommand{\supp}[1]{Appendix~#1 of the supplement}

\author{
    Ibne Farabi Shihab,\textsuperscript{\rm 1} Adria Binte Habib\textsuperscript{\rm 2}
}
\affiliations{
    \textsuperscript{\rm 1}Department of Computer Science, Iowa State University, USA\\
    \textsuperscript{\rm 2}Department of Computer Science \& Engineering, Independent University of Bangladesh, Bangladesh\\
    ishihab@iastate.edu , binte.adria708@gmail.com
}

\begin{document}
\maketitle

\begin{abstract}
Adaptive procedures must work without nuisance information an oracle may use, such as a gradient scale or
a smoothness index, and robust procedures may also have to answer queries whose coordinate and inspection
time are chosen only after the data are seen. Such comparisons are meaningful only when the oracle
advantage and the validity contract are stated explicitly. We formalize nuisance adaptation through a
slice-normalized minimax ratio that retains the worst-case instance inside each nuisance slice, and
separately define the robustness cost of expanding from one preannounced Gaussian query to arbitrary
post-hoc inspection. Our main result is a finite-horizon composition law for Gaussian certification: from
$M$ independent coordinates, a familywise certifier protecting every coordinate and every time up to $T$
pays optimal normalized squared half-width $\Theta_\alpha(\log(eM)+\log\log(e^eT))$ within the
sample-mean-centered rectangular class. Epoch stitching gives the upper bound; independent Gaussian block
increments across coordinates and geometric time scales give a matching lower bound that already holds on
a geometric checkpoint grid and forces quantiles of the realized maximum width, so the selection and
stopping taxes add. Two benchmark regimes complete the picture: unknown gradient scale in online convex
optimization has constant cost, whereas pointwise adaptation over a continuum of nested H\"older classes
costs $\Theta((\log n/\log\log n)^{s_1/(2s_1+1)})$. Cast as model monitoring, the composition law lets an
analyst inspect any of $M$ slice metrics at any data-dependent time: the naive fixed-query band's selected
coverage degrades sharply---to $0.30$ at $M=1$ and to zero for $M\ge10$---while the epoch-stitched
certifier holds familywise coverage at an additive iterated-logarithm width cost. Experiments put both sharp predictions at risk of refutation; both survive.
\end{abstract}
 
\section{Introduction}
Learning algorithms are routinely tuned to quantities that are unavailable at deployment time. An online
optimizer may be analyzed with a gradient bound and a nonparametric estimator with a smoothness index. An
oracle may use such information; an adaptive procedure may not. A related but distinct burden appears when
a statistical certificate must remain valid for a query chosen only after the data are seen. These two
situations should not be conflated: a nuisance-adaptation ratio is interpretable only when oracle and
adaptive procedures solve the same task under the same loss and validity contract, whereas a robustness
comparison must state explicitly that one side protects a larger query family.
 
This paper develops a common normalization for the first problem and an explicit companion functional for
the second. A nuisance parameter indexes a family of instance classes; the oracle minimax risk is computed
within each slice, and one nuisance-agnostic procedure is compared with that slice-specific benchmark
before a supremum is taken over slices. Keeping the within-slice worst case is essential, since a
smoothness index does not identify a regression function and a gradient bound does not identify an online
loss sequence.
 
Two sufficient principles organize the framework. Under \emph{symmetry}, if changing the unknown quantity
induces a bijection of instance classes and a conjugation of oracle algorithms under which risk scales
homogeneously, one equivariant nuisance-agnostic algorithm is simultaneously competitive on every slice.
Under \emph{statistical ambiguity}, if two slices are hard to distinguish but require incompatible
low-risk decisions, a Le~Cam reduction gives a normalized lower bound; an asymmetric constrained-risk
lemma complements this by transferring very small risk on one slice into a lower bound on a nearby
alternative.
 
The main new theorem concerns a different but precisely defined robustness expansion. We observe $M$
independent Gaussian coordinates over time. A fixed-query benchmark is told in advance which coordinate
will be reported and when it will be inspected. A familywise certifier must instead maintain
sample-mean-centered intervals covering all $M$ means at all times through $T$. We first prove that
familywise coverage is equivalent, pointwise in the unknown mean, to coverage against every measurable
post-hoc coordinate--time selector. We then define the normalized cost of expanding from one
preannounced query to that robust contract and prove it is
$\Theta_\alpha(\log(eM)+\log\log(e^eT))$. The upper bound allocates error probability across coordinates
and geometric time epochs. The lower bound is not a union-bound converse: under the null mean, coverage
forces all normalized partial sums inside the radius envelope, and differences across quartic time points
produce $M\lfloor\log_4T\rfloor$ independent standard Gaussian block increments whose maximum forces the
same additive $\log M+\log\log T$ term. The argument works at every finite horizon $T\ge16$, uses only a
geometric checkpoint grid, and also lower-bounds fixed quantiles of the realized maximum width, so the
result is not an artifact of a deterministic pathwise supremum.
 
Two benchmark regimes test the normalization. For online convex optimization, an unknown global
gradient scale is a true symmetry. Algorithm conjugation gives exact oracle-risk homogeneity, and
scale-free online learning yields $\Cadapt=\Theta(1)$. Pointwise smoothness adaptation is subtler. A
coarse Lepski bound pays $(\log n)^{s/(2s+1)}$ on every displayed slice, but the sharp adaptive profile
of \citet{lepski1997pointwise} degenerates as $s$ approaches the smooth endpoint. Optimizing that profile
under the paper's infimum--supremum definition, and matching it by a moving local-alternative argument,
gives $\Cadapt{}_{n}^{\rm pt}([s_0,s_1])=\Theta((\log n/\log\log n)^{s_1/(2s_1+1)})$. The hardest witness
is not the endpoint itself but a class whose exponent lies $\Theta(1/\log\log n)$ below it, which is why
maximizing one estimator's coarse upper profile over the smoothness scale gives the wrong global cost.
 
The novelty claim is deliberately narrow, and the Related Work section states it precisely: we do not
claim invention of the adaptive-to-oracle ratio, of simultaneous confidence sequences, or of
post-selection inference. The new mathematical claim is the sharp finite-horizon selection--stopping
robustness law for the stated centered rectangular family, together with the independent-block lower
bound proving additive composition.
 
Experiments demonstrate and stress-test these predictions. Cast as model monitoring, the composition law
shows that a naive fixed-query band's selected coverage degrades to $0.30$ at $M=1$ and to zero for
$M\ge10$ under worst-slice inspection, while the epoch-stitched certifier holds familywise coverage at an additive-iterated-logarithm width cost. On a
grid to $M=1024$, $T\approx10^6$ the minimal familywise squared radius is affine in
$(\log(eM),\log\log(e^eT))$ with $R^2=0.998$ and negligible interaction, its $\log(eMK)$ slope lies in the
theorem's constant band, and coverage is uniform over adversarial means. The pointwise worst slice
migrates interiorly toward $s_1$ at the predicted $1/\log\log n$ rate, and the scale-free OCO cost is a
flat $\Theta(1)$ regret ratio under an adversary realizing the lower bound.
 
\section{Setup and Related Work}\label{sec:def}
Let $m$ denote the statistical budget, such as sample size or horizon. A nuisance parameter
$\lambda\in\Lambda$ indexes a slice $\mathfrak{P}_{\lambda,m}$. The index may describe a property of the
data-generating instance, such as smoothness or scale. For an algorithm $A$ and instance $P$, let
$L_m(A,P)\ge0$ be the loss. The oracle knows $\lambda$ but not the full instance
$P\in\mathfrak{P}_{\lambda,m}$, whereas the adaptive learner uses one algorithm across all slices.
 
\begin{definition}[Slice oracle risk and cost of adaptivity]\label{def:cost}
For each $\lambda\in\Lambda$, define
\[
\Roracle_m(\lambda)
=\inf_{A\in\Aoracle(\lambda)}\ \sup_{P\in\mathfrak{P}_{\lambda,m}}
\E_P[L_m(A,P)].
\]
Assume $0<\Roracle_m(\lambda)<\infty$ on the slices under comparison and
$\Aad\subseteq\Aoracle(\lambda)$ for every $\lambda$. The slice-normalized cost is
\[
\Cadapt{}_{m}(\Lambda)
=\inf_{A\in\Aad}\ \sup_{\lambda\in\Lambda}
\frac{\displaystyle\sup_{P\in\mathfrak{P}_{\lambda,m}}\E_P[L_m(A,P)]}
{\Roracle_m(\lambda)}.
\]
The associated slice profile of a fixed adaptive algorithm is the ratio before taking the supremum over
$\lambda$.
\end{definition}
 
The definition deliberately excludes an arbitrary additive stabilizer. If an oracle risk is zero, a
multiplicative ratio is not the right object; that slice requires an additive excess-risk or regret
comparison. The Gaussian selection--stopping theorem is deliberately kept outside
Definition~\ref{def:cost}: its robust side protects all coordinate--time inspections simultaneously,
whereas its denominator is the optimal preannounced-query benchmark. Conflating those validity contracts
would make the fixed-width procedure $q_\alpha/\sqrt t$ a spurious ratio-one counterexample. We therefore
introduce a separate post-hoc inspection robustness cost below.
 
\begin{proposition}[Orientation]\label{prop:orientation}
Under Definition~\ref{def:cost}, $\Cadapt{}_{m}(\Lambda)\ge1$.
\end{proposition}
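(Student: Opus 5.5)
The plan is to compare, for each fixed adaptive algorithm, the numerator on a single slice with the oracle infimum on that slice, using only the inclusion $\Aad\subseteq\Aoracle(\lambda)$ and the positivity and finiteness of $\Roracle_m(\lambda)$. Since the cost is an infimum over $A\in\Aad$ of a supremum over $\lambda$, it suffices to show that every $A\in\Aad$ has slice profile at least one on every slice. Taking the supremum over $\lambda$ then keeps the value at least one, and taking the infimum over $A$ preserves that lower bound.

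First, fix $A\in\Aad$ and $\lambda\in\Lambda$. Because $A\in\Aoracle(\lambda)$, the worst-case risk $\sup_{P\in\mathfrak{P}_{\lambda,m}}\E_P[L_m(A,P)]$ is one of the quantities whose infimum defines $\Roracle_m(\lambda)$. Hence
\[
\sup_{P\in\mathfrak{P}_{\lambda,m}}\E_P[L_m(A,P)]\ \ge\ \Roracle_m(\lambda).
\]
Second, dividing by $\Roracle_m(\lambda)\in(0,\infty)$ gives a slice profile of at least $1$. This step is where the standing assumption matters: with a zero denominator the ratio would be undefined, and with an infinite one it would not be meaningful. The bound holds even if the numerator is $+\infty$, where the ratio is $+\infty\ge1$.

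Third, I take $\sup_{\lambda}$ and then $\inf_{A}$. This requires $\Lambda$ to be nonempty, which I take as implicit in the definition, and $\Aad$ to be nonempty. If $\Aad$ were empty, the infimum would be $+\infty$, which is still at least one.

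There is no real obstacle here. The only point needing care is the bookkeeping of the standing assumptions: the comparison is made on the slices where $0<\Roracle_m(\lambda)<\infty$, and the inclusion $\Aad\subseteq\Aoracle(\lambda)$ must hold for each such slice. Without that inclusion, an adaptive learner could use tools unavailable to the oracle and the ratio could fall below one.
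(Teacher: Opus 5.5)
Your proof is correct and is essentially the paper's argument. The inclusion $\Aad\subseteq\Aoracle(\lambda)$ makes every adaptive algorithm's worst-case slice risk at least $\Roracle_m(\lambda)$, so each slice ratio is at least one, and taking the supremum over $\lambda$ and then the infimum over $A$ preserves this; your extra bookkeeping about a nonempty $\Lambda$ and the $+\infty$ cases is a harmless refinement the paper leaves implicit.
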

 
\begin{proof}
Since $\Aad\subseteq\Aoracle(\lambda)$, $\Roracle_m(\lambda)$ is no larger than the worst-case risk of any
$A\in\Aad$ on that slice, so each ratio is at least one; suprema and infima preserve this.
\end{proof}
 
\subsection{The comparison contract}
The oracle and adaptive sides of Definition~\ref{def:cost} must share the same instance class, loss,
observation model, horizon, and validity requirement; their only difference is access to $\lambda$. This
rule prevents a common category error: a fixed-time Gaussian confidence interval and an anytime-valid
confidence sequence do not differ only in knowledge of a noise scale but satisfy different coverage
requirements, and comparing a worst-case OCO guarantee with a realized-energy coefficient does not
produce a formal adaptivity ratio. A robustness expansion can still be scientifically useful, but must be
named as such rather than presented as a same-contract nuisance comparison, as the Gaussian composition
theorem does explicitly. The commonality across the paper is the normalization discipline, not a claim
that regret, estimation risk, and confidence width are the same mathematical object.
 
\subsection{Related Work}
The decision-theoretic basis of the comparison is classical: oracle and adaptive procedures are minimax
decision rules on different information classes, and the lower-bound toolkit includes two-point, Le~Cam,
Assouad, and Fano arguments
\citep{wald1950statistical,lecam1986asymptotic,lecam1973convergence,assouad1983deux,yu1997assouad,tsybakov2009nonparametric,wainwright2019high}.
The framework contribution is not a new testing inequality but the requirement that the adaptive and
oracle sides be normalized slice by slice under the same task, loss, observation model, and validity
contract.
 
Bandit theory provides many of the closest examples of parameter dependence, from gap-dependent and
minimax regret analyses \citep{lai1985asymptotically,auer2002finite,lattimore2020bandit} to best-arm
identification and sequential exploration, which expose related stopping and confidence effects
\citep{jamieson2014lil,kaufmann2016complexity,garivier2016optimal,audibert2010best}; iterated-logarithm
and time-uniform boundaries arise in \citet{darling1967iterated} and \citet{howard2021time}. These works
motivate the temporal axis of our certification theorem but do not turn it into an unknown-variance
bandit regret theorem, and noise-adaptive confidence sets require their own reduction and objective
\citep{jun2024noiseadaptive}.
 
Adaptation to unknown smoothness is classical in nonparametric statistics
\citep{lepski1991problem,lepski1997optimal,donoho1995adapting,cai2005adaptive,gine2016mathematical,ibragimov1981statistical}.
For pointwise estimation, the logarithmic penalty and the sharper smoothness-dependent adaptive profile
are treated directly in \citet{lepski1997pointwise}; nonasymptotic oracle comparisons are developed in
\citet{spokoiny2009parameter}. Smoothness adaptation also appears in continuous-armed bandits
\citep{locatelli2018adaptivity}. Our pointwise theorem is not a new fixed-slice minimax rate. It optimizes
the sharp classical profile under Definition~\ref{def:cost} and supplies a direct constrained-risk lower
bound for the resulting global ratio.
 
Online learning provides the free-adaptation benchmark. AdaGrad, adaptive bound optimization,
scale-free online linear optimization, coin betting, and the broader OCO literature show that important
loss scales can be removed without asymptotic cost
\citep{duchi2011adaptive,mcmahan2010adaptive,orabona2015scale,orabona2016coin,cesabianchi2006prediction,hazan2016introduction,shalev2012online,orabona2019modern}.
The OCO result below uses this established symmetry rather than claiming a new optimizer.
 
Other literatures use ``adaptivity'' for a different axis: choosing later queries from earlier answers.
That notion is studied in property testing, submodular optimization, and adaptive sensing
\citep{canonne2018adaptivity,golovin2011adaptive},
with lower-bound perspectives from Yao's principle, communication complexity, and
statistical--computational gaps
\citep{yao1977probabilistic},
and related oracle gaps in active learning, distributed estimation, and reinforcement learning
\citep{hanneke2014theory,castro2008minimax,zhang2013information,azar2017minimax,jaksch2010near}.
Our hidden nuisance parameter is a third axis, and the comparison contract is what lets these
distinctions remain explicit.
 
The novelty claim is deliberately narrow. A recent transfer-learning paper defines an intrinsic cost of
adaptation through a worst-case adaptive-risk to oracle-risk ratio, so
we do not claim invention of the ratio. Time-uniform vector confidence regions, post-selection inference,
and analyses separating choosing and stopping also precede this work
\citep{chugg2025timeuniform}. The new mathematical claim is the sharp
finite-horizon oracle-normalized selection--stopping composition law for the centered rectangular class,
together with the block-increment lower bound proving that the coordinate and inspection-time taxes add.
 
\subsection{Two General Principles}\label{sec:principles}
The examples are organized by two sufficient conditions. The first gives constant cost under a nuisance
symmetry. The second turns decision incompatibility and statistical closeness into normalized lower
bounds.
 
\begin{theorem}[Risk-homogeneous nuisance symmetry]\label{thm:symmetry}
Fix a reference slice $\lambda_0$. Suppose that for every $\lambda\in\Lambda$ there are bijections
\[
T_\lambda:\mathfrak{P}_{\lambda_0,m}\to\mathfrak{P}_{\lambda,m},
\qquad
U_\lambda:\Aoracle(\lambda_0)\to\Aoracle(\lambda),
\]
and a scalar $a_\lambda>0$ such that, for every oracle algorithm $A$ and every
$P\in\mathfrak{P}_{\lambda_0,m}$,
\[
\E_{T_\lambda P}[L_m(U_\lambda A,T_\lambda P)]
=a_\lambda\E_P[L_m(A,P)].
\]
Then
\[
\Roracle_m(\lambda)=a_\lambda\Roracle_m(\lambda_0).
\]
Assume in addition that a single $A^{\rm sf}\in\Aad\subseteq\Aoracle(\lambda)$ for every $\lambda$ is
equivariant in risk,
\[
\E_{T_\lambda P}[L_m(A^{\rm sf},T_\lambda P)]
=a_\lambda\E_P[L_m(A^{\rm sf},P)],
\]
and satisfies
\[
\sup_{P\in\mathfrak{P}_{\lambda_0,m}}\E_P[L_m(A^{\rm sf},P)]
\le K\Roracle_m(\lambda_0).
\]
Then $\Cadapt{}_{m}(\Lambda)\le K$.
\end{theorem}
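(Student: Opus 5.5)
The argument has two parts: first transport the oracle minimax problem from the reference slice to slice $\lambda$ through the two bijections, then use equivariance of $A^{\rm sf}$ to transfer its reference-slice guarantee to every slice.

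\textbf{Oracle risk identity.} Fix $\lambda$ and $A\in\Aoracle(\lambda_0)$. Because $T_\lambda$ is a bijection onto $\mathfrak{P}_{\lambda,m}$, reindexing the supremum and applying the hypothesis gives
\[
\sup_{Q\in\mathfrak{P}_{\lambda,m}}\E_Q[L_m(U_\lambda A,Q)]
=\sup_{P\in\mathfrak{P}_{\lambda_0,m}}\E_{T_\lambda P}[L_m(U_\lambda A,T_\lambda P)]
=a_\lambda\sup_{P\in\mathfrak{P}_{\lambda_0,m}}\E_P[L_m(A,P)].
\]
Since $U_\lambda$ is a bijection onto $\Aoracle(\lambda)$, the infimum over $B\in\Aoracle(\lambda)$ equals the infimum over $A\in\Aoracle(\lambda_0)$ of the left side. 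Because $a_\lambda>0$, it factors out of both the supremum and the infimum, and so $\Roracle_m(\lambda)=a_\lambda\Roracle_m(\lambda_0)$. This step uses surjectivity of both maps: surjectivity of $T_\lambda$ ensures the reindexed supremum ranges over the whole slice, and surjectivity of $U_\lambda$ ensures no oracle algorithm on slice $\lambda$ is missed. Injectivity is not needed.

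\textbf{Adaptive upper bound.} Apply the same reindexing to $A^{\rm sf}$. Since $A^{\rm sf}$ is a single algorithm and is risk-equivariant,
\[
\sup_{Q\in\mathfrak{P}_{\lambda,m}}\E_Q[L_m(A^{\rm sf},Q)]
=a_\lambda\sup_{P\in\mathfrak{P}_{\lambda_0,m}}\E_P[L_m(A^{\rm sf},P)]
\le a_\lambda K\Roracle_m(\lambda_0)=K\Roracle_m(\lambda).
\]
Dividing by $\Roracle_m(\lambda)$ is legitimate because Definition~\ref{def:cost} assumes $0<\Roracle_m(\lambda)<\infty$. The slice profile of $A^{\rm sf}$ is therefore at most $K$ for every $\lambda$, and taking the supremum over $\lambda$ bounds it by $K$. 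Since $A^{\rm sf}\in\Aad$, the infimum in $\Cadapt{}_{m}(\Lambda)$ is at most this value, which gives $\Cadapt{}_{m}(\Lambda)\le K$.

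\textbf{Main obstacle.} The only delicate point is bookkeeping rather than analysis. One must check that the suprema and infima commute correctly with multiplication by $a_\lambda$, which holds because $a_\lambda$ is a positive constant independent of $P$ and $A$. One must also use the two bijections so that equalities, not just inequalities, hold. The cases where the values are infinite or zero are excluded by the standing assumption of Definition~\ref{def:cost}. By Proposition~\ref{prop:orientation}, the resulting bound forces $K\ge1$.
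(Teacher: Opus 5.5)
Your proof is correct and follows essentially the same route as the paper's: reindex the within-slice supremum through the surjection $T_\lambda$ and the oracle infimum through the surjection $U_\lambda$, then factor out the positive constant $a_\lambda$ to get $\Roracle_m(\lambda)=a_\lambda\Roracle_m(\lambda_0)$. Applying the same reindexing together with risk-equivariance to $A^{\rm sf}$ bounds every slice ratio by $K$, and your side remarks (only surjectivity is used; $K\ge1$ follows, indeed directly from $A^{\rm sf}\in\Aoracle(\lambda_0)$) are accurate.
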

 
Proof in \supp{A}.
 
\begin{theorem}[Two-slice decision-relevant ambiguity]\label{thm:twoslice}
Consider two slices $\lambda_0,\lambda_1$ and instances
$P_j\in\mathfrak{P}_{\lambda_j,m}$. A fixed adaptive algorithm acts through one common, possibly
randomized Markov kernel from the observed data to its output under both instances. In an interactive or
sequential problem, $P_j$ denotes the law of the full observed transcript. Let $B_0$ and $B_1$ be
disjoint sets of outputs. Suppose that on instance $P_j$ the loss is at least $\Delta_j>0$ whenever the
output is outside $B_j$. Then every adaptive algorithm satisfies
\[
\max_{j\in\{0,1\}}
\frac{\E_{P_j}[L_m(A,P_j)]}{\Roracle_m(\lambda_j)}
\ge
\frac{1-\TV(P_0,P_1)}
{\Roracle_m(\lambda_0)/\Delta_0+\Roracle_m(\lambda_1)/\Delta_1}.
\]
By Pinsker's inequality, $\TV(P_0,P_1)$ may be replaced by
$\sqrt{\KL(P_0\|P_1)/2}$ whenever the latter is smaller than one.
\end{theorem}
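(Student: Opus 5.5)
The argument is a standard Le~Cam two-point reduction. Fix an adaptive algorithm $A$ and write $Q_j$ for the law of its output under $P_j$. Since $A$ acts through one common Markov kernel $K$ on the observed data (or on the full transcript in the interactive case), $Q_j=P_jK$. The data-processing inequality for total variation then gives $\TV(Q_0,Q_1)\le\TV(P_0,P_1)$.

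The first step is to convert losses into output probabilities. By hypothesis $L_m(A,P_j)\ge\Delta_j\mathbf{1}\{\text{output}\notin B_j\}$, and nonnegativity of the loss gives
\[
\E_{P_j}[L_m(A,P_j)]\ge\Delta_j\,Q_j(B_j^c).
\]
The second step uses the disjointness of $B_0$ and $B_1$, which gives $B_1\subseteq B_0^c$. Hence
\[
Q_0(B_0^c)+Q_1(B_1^c)\ge Q_0(B_1)+1-Q_1(B_1)\ge 1-\TV(Q_0,Q_1)\ge1-\TV(P_0,P_1).
\]

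The third step normalizes. Write $r_j=\E_{P_j}[L_m(A,P_j)]/\Roracle_m(\lambda_j)$ and $\rho=\max_j r_j$. Then
\[
Q_j(B_j^c)\le \frac{\E_{P_j}[L_m(A,P_j)]}{\Delta_j}=\frac{r_j\Roracle_m(\lambda_j)}{\Delta_j}\le\frac{\rho\,\Roracle_m(\lambda_j)}{\Delta_j}.
\]
Summing over $j\in\{0,1\}$ and combining with the second step yields
\[
\rho\bigl(\Roracle_m(\lambda_0)/\Delta_0+\Roracle_m(\lambda_1)/\Delta_1\bigr)\ge1-\TV(P_0,P_1),
\]
which is the claimed inequality. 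The standing assumption $0<\Roracle_m(\lambda_j)<\infty$ from Definition~\ref{def:cost} ensures the ratios are well defined and the denominator is positive. If either expected loss is infinite, the bound is trivial.

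For the Pinsker clause, $\TV(P_0,P_1)\le\sqrt{\KL(P_0\|P_1)/2}$, so replacing $\TV$ by this quantity only decreases the right-hand side. The restriction to values below one merely avoids a vacuous negative bound.

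The only point needing care is the second step in the sequential setting. Data processing must be applied to the transcript law, which is legitimate because the statement defines $P_j$ as that law and the algorithm's output is a (randomized) function of the transcript. A randomized kernel is handled by including the internal randomness independently under both instances, which does not change the total variation.
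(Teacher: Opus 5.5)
Your proof is correct and follows the same Le~Cam two-point route as the paper. You push both transcript laws through the common kernel, lower-bound each risk by $\Delta_j Q_j(B_j^c)$, use disjointness of $B_0,B_1$ to get $Q_0(B_0^c)+Q_1(B_1^c)\ge 1-\TV(P_0,P_1)$, and bound each $Q_j(B_j^c)$ by the maximal normalized ratio times $\Roracle_m(\lambda_j)/\Delta_j$; your treatment of the Pinsker clause, internal randomization, and infinite risks is also fine.
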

 
Proof in \supp{A}.
 
\begin{lemma}[Asymmetric constrained-risk transfer]\label{lem:constrained-risk}
Let $P_1\ll P_0$, let $\vartheta(P)$ be a real target, and let $\widehat\vartheta$ be any estimator. Write
\[
r_j=\left\{\E_{P_j}(\widehat\vartheta-\vartheta(P_j))^2\right\}^{1/2},
\qquad
\Delta=|\vartheta(P_1)-\vartheta(P_0)|,
\]
and
\[
I(P_1,P_0)
=\left\{\E_{P_0}\left(\frac{dP_1}{dP_0}\right)^2\right\}^{1/2}.
\]
Then
\[
r_1\ge\Delta-r_0I(P_1,P_0).
\]
\end{lemma}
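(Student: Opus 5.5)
The plan is to write the shift $\Delta$ as an $L^2(P_1)$-difference and split it with the triangle inequality. After that, the only real work is converting the $P_1$-term on the $P_0$ side into $P_0$-quantities through the likelihood ratio. Set $\vartheta_j=\vartheta(P_j)$. Since $\vartheta_1-\vartheta_0$ is a constant, it equals its own $L^2(P_1)$ norm. Minkowski's inequality in $L^2(P_1)$ then gives
\[
\Delta=\bigl\{\E_{P_1}(\vartheta_1-\vartheta_0)^2\bigr\}^{1/2}
\le \bigl\{\E_{P_1}(\widehat\vartheta-\vartheta_1)^2\bigr\}^{1/2}
+\bigl\{\E_{P_1}(\widehat\vartheta-\vartheta_0)^2\bigr\}^{1/2}
= r_1+\bigl\{\E_{P_1}(\widehat\vartheta-\vartheta_0)^2\bigr\}^{1/2}.
\]
It therefore suffices to show that $\bigl\{\E_{P_1}(\widehat\vartheta-\vartheta_0)^2\bigr\}^{1/2}\le r_0\,I(P_1,P_0)$.

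For this step I will use $P_1\ll P_0$ and write $\rho=dP_1/dP_0$. A change of measure gives
$\E_{P_1}(\widehat\vartheta-\vartheta_0)^2=\E_{P_0}[\rho\,(\widehat\vartheta-\vartheta_0)^2]$. One could try Cauchy--Schwarz directly on the product $\rho\cdot(\widehat\vartheta-\vartheta_0)^2$. That route produces a fourth moment of the error under $P_0$, which does not appear in the statement, so it fails. This is the main obstacle.

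Instead I will work with first powers. The same triangle inequality holds in $L^1(P_1)$:
\[
\Delta\le \E_{P_1}|\widehat\vartheta-\vartheta_1|+\E_{P_1}|\widehat\vartheta-\vartheta_0|
\le r_1+\E_{P_0}\bigl[\rho\,|\widehat\vartheta-\vartheta_0|\bigr],
\]
where Jensen's inequality bounds the $L^1(P_1)$ error by $r_1$. Cauchy--Schwarz under $P_0$ now bounds the last term by
$\{\E_{P_0}\rho^2\}^{1/2}\{\E_{P_0}(\widehat\vartheta-\vartheta_0)^2\}^{1/2}=I(P_1,P_0)\,r_0$. Rearranging gives $r_1\ge\Delta-r_0I(P_1,P_0)$. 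So the final proof uses the $L^1$ version from the start and skips the $L^2$ Minkowski step.

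A few remaining checks. If $I(P_1,P_0)=\infty$ or $r_0=\infty$, the bound is vacuous. The bound is also vacuous when its right side is negative, so no sign issue arises. The estimator is one common measurable function of the data under both laws, which is exactly what makes the change of measure legitimate. If $\widehat\vartheta$ is randomized, I include the external randomization in the sample space under both $P_0$ and $P_1$ with the same kernel. The Radon--Nikodym derivative is unchanged, so the argument goes through verbatim.
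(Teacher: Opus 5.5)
Your final argument is correct and is essentially the standard constrained-risk (Brown--Low type) proof: change measure to $P_0$, apply Cauchy--Schwarz to get $r_0\,I(P_1,P_0)$, use Jensen under $P_1$ to get $r_1$, and close with the triangle inequality. Putting the absolute values inside the $P_1$-expectations, rather than bounding the biases $|\E_{P_1}\widehat\vartheta-\vartheta(P_j)|$, is only a cosmetic variation, and you were right to drop the $L^2$ Minkowski route since it would need a fourth moment under $P_0$.
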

 
Proof in \supp{A}.
 
Theorem~\ref{thm:symmetry} formalizes the free side of the rescaling intuition.
Theorem~\ref{thm:twoslice} is the symmetric lower-bound template, while
Lemma~\ref{lem:constrained-risk} transfers unusually small risk on one slice into a lower bound on a
nearby alternative. The latter is used in the pointwise proof. None of the three is an if-and-only-if
criterion.
 
\section{Three Regimes and Matching Bounds}\label{sec:bounds}
Table~\ref{tab:summary} states the three comparisons before giving their details. The OCO and pointwise
rows instantiate Definition~\ref{def:cost}. The Gaussian row uses the separate robustness functional in
Definition~\ref{def:inspection-cost}.
 
\begin{table*}[t]
\centering
\small
\setlength{\tabcolsep}{5pt}
\begin{tabular}{p{0.16\linewidth}p{0.14\linewidth}p{0.23\linewidth}p{0.24\linewidth}p{0.10\linewidth}}
\toprule
Problem & Hidden quantity & Fixed comparison contract & Normalized result & Status \\
\midrule
Scale-free OCO & Gradient scale $G$ & Worst-case regret, same horizon and domain & $\Theta(1)$ & Prior results + framework \\
Gaussian familywise certification & Post-hoc coordinate and time inspection & Centered rectangles; fixed-query benchmark versus robustness to all coordinate--time inspections & $\Theta_\alpha(\log(eM)+\log\log(e^eT))$ & New matching theorem \\
Pointwise H\"older estimation & Smoothness $s$ & Pointwise RMSE over a continuum of nested H\"older balls & $\Theta\bigl((\tfrac{\log n}{\log\log n})^{\beta_1}\bigr)$ & Sharp derived cost \\
\bottomrule
\end{tabular}
\caption{Two nuisance-adaptation comparisons and one explicitly separated robustness expansion, with
$\beta_1=s_1/(2s_1+1)$. The Gaussian row concerns familywise coordinate--time certification, not
unknown-variance bandit regret.}
\label{tab:summary}
\end{table*}
 
\subsection{Unknown gradient scale: constant-cost OCO}
Let $\mathcal{K}\subset\mathbb{R}^d$ be convex and compact with Euclidean diameter $D>0$. At round
$t$, the learner chooses $x_t\in\mathcal{K}$ and then observes a linear loss vector $g_t$ with
$\|g_t\|_2\le G$. The regret is
\[
\operatorname{Reg}_T
=\sum_{t=1}^T\langle g_t,x_t\rangle
-\min_{u\in\mathcal{K}}\sum_{t=1}^T\langle g_t,u\rangle.
\]
The nuisance slice indexed by $G$ contains all sequences satisfying the norm bound. The oracle may tune
to $G$; the adaptive algorithm must not use it.
 
Scale-free online linear optimization provides an algorithm whose decisions are unchanged when all loss
vectors are multiplied by the same positive constant and whose regret is bounded by a universal constant
times $D\sqrt{\sum_t\|g_t\|_2^2}$ \citep{orabona2015scale,duchi2011adaptive}. Standard minimax lower
bounds are of order $GD\sqrt{T}$ on any nontrivial bounded domain. Conjugating algorithms by
$g_t\mapsto ag_t$ proves the exact identity $\Roracle_T(aG)=a\Roracle_T(G)$; \supp{B} gives
both directions. The instance classes may be nested, and deterministic sequences are represented as
point-mass distributions in Definition~\ref{def:cost}. Theorem~\ref{thm:symmetry} therefore applies.
 
\begin{corollary}[Scale-free OCO]\label{cor:oco}
For $0<G_{\min}\le G_{\max}<\infty$,
\[
\Cadapt{}_{T}^{\rm oco}([G_{\min},G_{\max}])=\Theta(1).
\]
The constants depend on the chosen norm and scale-free algorithm, not on $G$ or $T$.
\end{corollary}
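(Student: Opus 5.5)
The lower bound $\Cadapt{}_{T}^{\rm oco}\ge1$ is Proposition~\ref{prop:orientation}, so the work is the upper bound, obtained by instantiating Theorem~\ref{thm:symmetry} with reference slice $\lambda_0=G_{\min}$ (or any fixed $G_0$), $a_G=G/G_0$, and $T_G$ the map sending a loss sequence $(g_t)$ with $\|g_t\|_2\le G_0$ to $(a_Gg_t)$. This is a bijection onto the slice $\{\|g_t\|_2\le G\}$, since the inverse divides by $a_G$. The algorithm conjugation $U_G$ sends an oracle algorithm $A$ for scale $G_0$ to the algorithm that divides each observed $g_t$ by $a_G$ before feeding it to $A$ and plays $A$'s decisions. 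Because $A$ is allowed to depend on $G_0$, $U_GA$ is allowed to depend on $G$, so $U_GA\in\Aoracle(G)$. Its inverse multiplies by $a_G$, so $U_G$ is a bijection.

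Next I check the homogeneity identity. On $T_GP$, $U_GA$ plays exactly the same $x_t$ that $A$ plays on $P$. Regret is linear in the loss vectors at fixed decisions, and the comparator minimum scales by $a_G>0$. Hence $\operatorname{Reg}_T(U_GA,T_GP)=a_G\operatorname{Reg}_T(A,P)$ pathwise, and this also holds in expectation for randomized algorithms. Theorem~\ref{thm:symmetry} then gives $\Roracle_T(G)=a_G\Roracle_T(G_0)$. For the positivity requirement $0<\Roracle_T<\infty$ in Definition~\ref{def:cost}, I would cite the standard $\Omega(GD\sqrt T)$ lower bound for the lower side and projected OGD for the upper side. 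This requires $D>0$ and $T\ge1$.

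For the adaptive side, take $A^{\rm sf}$ to be the scale-free algorithm of \citet{orabona2015scale}. Its decisions are invariant under $g_t\mapsto ag_t$, so by the same linearity its regret satisfies the equivariance hypothesis with the same $a_G$. Its regret is at most $c\,D\sqrt{\sum_t\|g_t\|_2^2}\le cDG_0\sqrt T$ on the reference slice. The minimax lower bound $\Roracle_T(G_0)\ge c'G_0D\sqrt T$ is obtained with random signs $g_t=\pm G_0 v$ along a diameter direction $v$ and holds for $T\ge1$. Therefore $K=c/c'$ works, and Theorem~\ref{thm:symmetry} yields $\Cadapt{}_T^{\rm oco}([G_{\min},G_{\max}])\le c/c'$ uniformly in $G$ and $T$.

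The main obstacle is making sure the constants $c$ and $c'$ depend only on the norm and the algorithm, not on $T$ or $D$. For $c'$, I would use the two-point line-segment reduction: a segment of length $D$ inside $\mathcal K$ together with Khintchine's inequality gives $\E|\sum_t\epsilon_t|\ge\sqrt{T/2}$, so $c'$ is a universal constant. For $c$, I would take the published scale-free regret bound as stated, with its diameter dependence made explicit. A secondary technical point is treating deterministic adversaries as point masses and allowing adaptive adversaries, so that "instance" means the full transcript law. The pathwise scaling argument is unaffected by either choice.
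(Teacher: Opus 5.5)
Your proposal is correct and follows the paper's route. The lower bound is Proposition~\ref{prop:orientation}. The upper bound instantiates Theorem~\ref{thm:symmetry} with four ingredients: the scaling bijection $g_t\mapsto a g_t$; the loss-rescaling conjugation, which gives $\Roracle_T(aG)=a\Roracle_T(G)$; the scale-invariant decisions and $O(D\sqrt{\sum_t\|g_t\|_2^2})$ regret of a scale-free learner; and the $\Omega(GD\sqrt T)$ random-sign minimax bound, which together give a constant $K$ free of $G$ and $T$. Your extra care about the explicit diameter dependence of the published scale-free bound (an additive $\max_t\|g_t\|_2$ term is absorbed by $\sqrt{\sum_t\|g_t\|_2^2}$) and about point-mass versus adaptive adversaries matches the caveats the paper itself flags.
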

 
The statement intentionally avoids an unsupported exact constant. A bound such as
$\sqrt{2}D\sqrt{\sum_t\|g_t\|^2}$ is algorithm- and normalization-specific and must be tied to an
explicit update rule, geometry, and zero-gradient convention before it is presented as canonical.
 
\subsection{Post-hoc coordinate and time: an additive robustness law}
Let
\[
X_1,X_2,\ldots\overset{\rm iid}{\sim}N(\mu,I_M),
\qquad \mu\in\mathbb{R}^M,
\]
and write $\bar X_{j,t}=t^{-1}\sum_{i=1}^tX_{ij}$. A preannounced fixed-query benchmark is told a
coordinate $j$ and inspection time $t$ and needs coverage only for that pair. Within the
sample-mean-centered class, its smallest worst-case half-width is
\[
w_t^{\rm or}=\frac{q_\alpha}{\sqrt t},
\qquad q_\alpha=\Phi^{-1}(1-\alpha/2).
\]
 
The robust certifier is told neither coordinate nor inspection time. At every $t\le T$ it outputs
rectangles with coordinate intervals
\[
C_{j,t}(X_{1:t})=
[\bar X_{j,t}-w_{j,t}(X_{1:t}),\bar X_{j,t}+w_{j,t}(X_{1:t})]
\]
and must satisfy
\[
\inf_{\mu\in\mathbb{R}^M}
\Pp_\mu\!\left(
\mu_j\in C_{j,t}\text{ for every }j\le M\text{ and }1\le t\le T
\right)\ge1-\alpha.
\]
Let $\mathfrak I$ be the class of all measurable selectors from the full trajectory $X_{1:T}$ to one
pair in $[M]\times[T]$.
 
\begin{lemma}[Familywise coverage and adversarial inspection]\label{lem:inspection-equivalence}
For every fixed $\mu$ and every collection of measurable intervals $\{C_{j,t}\}$,
\[
\Pp_\mu\!\left(\forall j,t:\mu_j\in C_{j,t}\right)
=
\inf_{\iota\in\mathfrak I}
\Pp_\mu\!\left(
\mu_{J_\iota(X)}\in C_{J_\iota(X),\tau_\iota(X)}
\right),
\]
where $\iota(X)=(J_\iota(X),\tau_\iota(X))$. Hence familywise coverage at level $1-\alpha$ is equivalent
to selected coverage at level $1-\alpha$ uniformly over all measurable post-hoc selectors.
\end{lemma}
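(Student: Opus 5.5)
The proof shows the two inequalities separately. Write
\[
G=\{\forall j\le M,\ 1\le t\le T:\ \mu_j\in C_{j,t}\}
\]
for the familywise good event. For each pair $(j,t)$ the coverage set $\{\mu_j\in C_{j,t}(X_{1:t})\}$ is measurable, because the interval endpoints are measurable functions of the data. Hence $G$, a finite intersection of $MT$ such sets, is measurable.

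\emph{Lower bound on the infimum.} Fix any selector $\iota\in\mathfrak I$. On $G$, every pair is covered, so in particular the pair $(J_\iota(X),\tau_\iota(X))$ is covered. This gives the set inclusion $G\subseteq\{\mu_{J_\iota}\in C_{J_\iota,\tau_\iota}\}$. The selected-coverage event is measurable because
\[
\{\mu_{J_\iota}\in C_{J_\iota,\tau_\iota}\}=\bigcup_{(j,t)}\bigl(\{\iota(X)=(j,t)\}\cap\{\mu_j\in C_{j,t}\}\bigr),
\]
a finite union of measurable sets. Monotonicity of $\Pp_\mu$ then gives $\Pp_\mu(G)\le\Pp_\mu(\mu_{J_\iota}\in C_{J_\iota,\tau_\iota})$ for every $\iota$. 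Taking the infimum over $\iota$ yields $\Pp_\mu(G)\le\inf_\iota(\cdot)$.

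\emph{Upper bound via an adversarial selector.} The infimum is attained by one explicit selector. Enumerate $[M]\times[T]$ in lexicographic order. Define $\iota^\star(X)$ to be the first pair $(j,t)$ with $\mu_j\notin C_{j,t}(X_{1:t})$ if such a pair exists, and a fixed default pair, say $(1,1)$, on $G$. This is legitimate because $\mu$ is held fixed in the statement, so $\iota^\star$ may depend on $\mu$; the class $\mathfrak I$ only requires measurability in $X$. Each level set $\{\iota^\star=(j,t)\}$ is a finite Boolean combination of the measurable coverage events, so $\iota^\star$ is measurable. Off $G$, the selected pair is by construction uncovered, so
\[
\{\mu_{J_{\iota^\star}}\in C_{J_{\iota^\star},\tau_{\iota^\star}}\}\subseteq G.
\]
Hence $\inf_\iota(\cdot)\le\Pp_\mu(G)$, and with the previous step the two sides are equal.

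\emph{Equivalence and the main obstacle.} Taking $\inf_\mu$ of both sides of the identity shows that familywise coverage at level $1-\alpha$ holds if and only if selected coverage at level $1-\alpha$ holds uniformly over all measurable selectors and all $\mu$. The only delicate point is the adversarial selector's dependence on the unknown $\mu$. The identity is stated pointwise in $\mu$, and $\mathfrak I$ contains all measurable maps of $X$, so this dependence is allowed; I will remark on it explicitly. For the "hence" clause this is harmless, because the forward direction uses arbitrary selectors and the converse only needs, for each $\mu$, some selector in $\mathfrak I$ achieving the familywise probability. The remaining steps are measurability bookkeeping, which is routine because the index set $[M]\times[T]$ is finite.
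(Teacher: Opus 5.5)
Your proof is correct and follows the same route as the paper's. The inclusion of the familywise event in every selected-coverage event gives one inequality. For the other, a selector that picks a miscovered pair whenever one exists attains the infimum; it depends on $\mu$ but is still a measurable map of $X$ for fixed $\mu$, which your explicit remark correctly justifies.
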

 
Proof in \supp{A}.
 
For a robust procedure $A$, define its deterministic normalized radius envelope
\[
\rho(A;M,T)
=\sup_{x_{1:T}}\max_{j\le M,\,1\le t\le T}
\sqrt t\,w_{j,t}(x_{1:t}).
\]
 
\begin{definition}[Post-hoc inspection robustness cost]\label{def:inspection-cost}
Within the sample-mean-centered rectangular class, let
\[
\mathcal C^{\rm cert}_{M,T}
=\inf_{A\in\mathcal A_{\rm fw}(M,T,\alpha)}
\frac{\rho(A;M,T)^2}{q_\alpha^2},
\]
where $\mathcal A_{\rm fw}(M,T,\alpha)$ is the familywise-valid class above.
\end{definition}
 
This is intentionally not identified with Definition~\ref{def:cost}. The numerator expands the validity
contract from one preannounced query to robustness against every coordinate--time inspection. The simple
procedure $w_{j,t}\equiv q_\alpha/\sqrt t$ has exact marginal coverage at each deterministic pair, but it
fails Lemma~\ref{lem:inspection-equivalence}; already at one fixed time and $\mu=0$, selecting the
coordinate with largest standardized deviation makes its coverage probability vanish as $M$ grows. The
pathwise envelope makes expectations and instance suprema vacuous in this particular functional;
Corollary~\ref{cor:realized-width} shows that the lower-bound mechanism survives for quantiles of the
realized maximum width.
 
\begin{theorem}[Finite-horizon selection--stopping composition]\label{thm:composition}
Fix $\alpha\in(0,1/2)$, $M\ge1$, and $T\ge16$. For the centered rectangular class above, there exist
constants $0<c_\alpha\le C_\alpha<\infty$, depending only on $\alpha$, such that
\[
\begin{aligned}
\mathcal C^{\rm cert}_{M,T}&\ge c_\alpha\!\left\{\log(eM)+\log\log(e^eT)\right\},\\
\mathcal C^{\rm cert}_{M,T}&\le C_\alpha\!\left\{\log(eM)+\log\log(e^eT)\right\}.
\end{aligned}
\]
More explicitly, with $K=\lfloor\log_4T\rfloor$,
\[
\mathcal C^{\rm cert}_{M,T}
\le
\frac{8}{q_\alpha^2}
\left[
\log\!\left(\frac{\pi^2M}{3\alpha}\right)+2\log(K+1)
\right],
\]
while
\[
\mathcal C^{\rm cert}_{M,T}
\ge c_\alpha'\log(eMK)
\]
for another constant $c_\alpha'>0$. Hence the selection tax $\log M$ and the stopping tax
$\log\log T$ add in squared half-width. Matching is claimed in $(M,T)$ for fixed $\alpha$; the dependence
of the displayed upper and lower constants on $\alpha$ is not claimed optimal.
\end{theorem}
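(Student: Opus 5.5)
For the upper bound we build an explicit epoch-stitched certifier with a deterministic width, so the envelope $\rho$ is easy to compute. Partition $\{1,\dots,T\}$ into geometric epochs $E_k=\{t:4^k\le t<4^{k+1}\}$ for $k=0,\dots,K$, where $K=\lfloor\log_4T\rfloor$. Give epoch $k$ the error budget $\alpha_k=6\alpha/(\pi^2(k+1)^2)$, so that $\sum_k\alpha_k\le\alpha$. Split each $\alpha_k$ equally over the $M$ coordinates.

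Fix a coordinate $j$ and an epoch $k$. Under $\Pp_\mu$ the centered partial sums $S_{j,t}=\sum_{i\le t}(X_{ij}-\mu_j)$ form a Gaussian random walk. By Doob's maximal inequality applied to $\exp(\theta S_{j,t})$, or by the reflection principle,
\[
\Pp\Bigl(\max_{t<4^{k+1}}S_{j,t}\ge x\Bigr)\le\exp\bigl(-x^2/(2\cdot4^{k+1})\bigr).
\]
On $E_k$ we have $t\ge4^k$, so to make the two-sided event $\{|S_{j,t}|\ge t\,w_{j,t}\}$ fail with probability at most $\alpha_k/M$ it suffices to take
\[
t\,w_{j,t}=\sqrt{2\cdot4^{k+1}\log(2M/\alpha_k)}\le\sqrt{8t\log(2M/\alpha_k)}.
\]
This gives $\sqrt t\,w_{j,t}\le\sqrt{8[\log(\pi^2M/(3\alpha))+2\log(K+1)]}$. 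A union bound over $j$ and $k$ then yields familywise validity uniformly in $\mu$. Dividing by $q_\alpha^2$ gives the displayed explicit upper bound, and it is $\le C_\alpha\{\log(eM)+\log\log(e^eT)\}$ since $\log(K+1)\lesssim\log\log(e^eT)$. The widths are data-independent, so the supremum over $x_{1:T}$ in $\rho$ is trivial.

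For the lower bound, take any familywise-valid $A$ with envelope $\rho=\rho(A;M,T)$. Validity at $\mu=0$ implies
\[
\Pp_0\bigl(|S_{j,t}|\le\sqrt t\,\rho\ \ \forall j\le M,\ t\le T\bigr)\ge1-\alpha,
\]
because every width is pathwise at most $\rho/\sqrt t$. Restrict attention to the checkpoints $t_k=4^k$ for $k=1,\dots,K$. Define the block increments
\[
Z_{j,k}=\frac{S_{j,t_k}-S_{j,t_{k-1}}}{\sqrt{t_k-t_{k-1}}}.
\]
These are $MK$ i.i.d.\ $N(0,1)$ variables, independent across both coordinates and blocks. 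On the coverage event,
\[
|S_{j,t_k}-S_{j,t_{k-1}}|\le\rho\bigl(\sqrt{t_k}+\sqrt{t_{k-1}}\bigr)=\tfrac32\rho\sqrt{t_k},
\]
and $\sqrt{t_k-t_{k-1}}=\tfrac{\sqrt3}{2}\sqrt{t_k}$, so $|Z_{j,k}|\le\sqrt3\,\rho$. Hence
\[
1-\alpha\le\Pp\Bigl(\max_{j,k}|Z_{j,k}|\le\sqrt3\,\rho\Bigr)=\bigl(1-2\bar\Phi(\sqrt3\,\rho)\bigr)^{MK}.
\]
From this, $2MK\,\bar\Phi(\sqrt3\,\rho)\le\log(1/(1-\alpha))$. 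The Mills-ratio lower bound $\bar\Phi(x)\ge c\,e^{-x^2/2}/x$ then gives $\rho^2\ge c_\alpha'\log(eMK)$ once $MK$ exceeds a constant depending on $\alpha$. For small $MK$, note that $K\ge2$ because $T\ge16$. In that regime the fixed-time Gaussian quantile alone forces $\rho\ge q_\alpha$, so $\mathcal C^{\rm cert}\ge1$, and adjusting $c_\alpha'$ covers this case.

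Finally, $\log(eMK)\asymp\log(eM)+\log\log(e^eT)$ for $T\ge16$, because $K\ge2$ and $\log K\asymp\log\log T$. This gives the first displayed inequality.

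The main obstacle I expect is constant bookkeeping in the lower bound. One issue is keeping the Mills-ratio inversion uniform in $\alpha<1/2$. The other is making sure the argument uses only the pathwise envelope rather than the realized widths. For the realized-width corollary mentioned after the theorem, I would replace $\rho$ by a $(1-\alpha')$-quantile of $\max_{j,t}\sqrt t\,w_{j,t}$. The same computation then goes through with $1-\alpha$ replaced by $1-\alpha-\alpha'$.
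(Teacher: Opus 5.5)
Your proposal is correct and follows the paper's own argument. For the upper bound, you use quartic epoch stitching with budget $6\alpha/(\pi^2M(k+1)^2)$ per coordinate and epoch, plus a Gaussian maximal inequality, which reproduces exactly the explicit bound $\frac{8}{q_\alpha^2}[\log(\pi^2M/(3\alpha))+2\log(K+1)]$. For the lower bound, you take the $MK$ independent standardized block increments between the checkpoints $4^{k-1}$ and $4^{k}$ under $\mu=0$, bound them by $\sqrt3\,\rho$ on the coverage event, and invert a Gaussian-maximum tail bound. The remaining bookkeeping is also handled correctly: $\rho\ge q_\alpha$ covers the small-$MK$ regime, and $\log(eMK)\asymp\log(eM)+\log\log(e^eT)$ holds for $T\ge16$.
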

 
The upper bound uses quartic epoch stitching. Failure probability proportional to
$\alpha/[M(k+1)^2]$ is assigned to coordinate $j$ and epoch $k$, and a Gaussian maximal inequality gives
a normalized half-width of order $\sqrt{\log(M/\alpha)+\log(k+1)}$. For the lower bound, work under
$\mu=0$ and evaluate partial sums at $t_k=4^k$. Their disjoint block increments form
$M\lfloor\log_4T\rfloor$ independent standard Gaussians. Joint coverage forces all of them below a
constant multiple of the normalized radius, and a Gaussian-maximum quantile bound gives
$\rho^2=\Omega_\alpha(\log(eMK))$. \supp{C} gives the complete argument.
 
\begin{corollary}[Realized-width lower quantiles]\label{cor:realized-width}
For a valid procedure, let
\[
W_A(X)=\max_{j\le M,\,t\le T}\sqrt t\,w_{j,t}(X_{1:t}),
\]
and let $Q_\gamma(V)=\inf\{x:\Pp(V\le x)\ge\gamma\}$. With
$K=\lfloor\log_4T\rfloor$, for every fixed $\gamma\in(\alpha,1)$,
\[
Q_\gamma(W_A)^2\ge c_{\alpha,\gamma}\log(eMK).
\]
Thus the tax is not created solely by the deterministic supremum over sample paths.
\end{corollary}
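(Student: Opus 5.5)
The plan is to reuse the lower-bound mechanism of Theorem~\ref{thm:composition}, but to replace the deterministic envelope $\rho(A;M,T)$ with the random quantity $W_A(X)$, and then to intersect with the coverage event rather than relying on a pathwise supremum. Throughout I work under the null mean $\mu=0$, which is allowed because validity is required uniformly in $\mu$.

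\textbf{Setup.} Write $S_{j,t}=\sum_{i\le t}X_{ij}$ and use the quartic checkpoints $t_k=4^k$ for $k=0,1,\dots,K$, with $K=\lfloor\log_4T\rfloor\ge2$ because $T\ge16$. The checkpoints satisfy $t_0=1$ and $t_K\le T$. Define the normalized block increments
\[
Z_{j,k}=\frac{S_{j,t_k}-S_{j,t_{k-1}}}{\sqrt{t_k-t_{k-1}}},\qquad j\le M,\ 1\le k\le K.
\]
Under $\mu=0$ these are $N=MK$ i.i.d.\ standard Gaussians, since the blocks are disjoint across time and the coordinates are independent.

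\textbf{Step 1: coverage controls the increments by $W_A$.} Let $E$ be the familywise coverage event, so $\Pp_0(E)\ge1-\alpha$. At $\mu=0$, on $E$ we have $|\bar X_{j,t}|\le w_{j,t}$ for all $j$ and $t$. Hence
\[
|S_{j,t}|\le\sqrt t\cdot\sqrt t\,w_{j,t}\le\sqrt t\,W_A(X).
\]
Applying this at $t_k$ and $t_{k-1}=t_k/4$ gives, on $E$,
\[
|Z_{j,k}|\le\frac{W_A(\sqrt{t_k}+\sqrt{t_k}/2)}{\sqrt{3t_k/4}}=\sqrt3\,W_A .
\]
Therefore $E\subseteq\{\max_{j,k}|Z_{j,k}|\le\sqrt3\,W_A\}$.

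\textbf{Step 2: transfer to the quantile.} Set $x=Q_\gamma(W_A)$. The quantile is attained because the distribution function is right-continuous, so $\Pp_0(W_A\le x)\ge\gamma$. By a union bound on complements,
\[
\Pp_0\Bigl(\max_{j,k}|Z_{j,k}|\le\sqrt3\,x\Bigr)\ge\Pp_0(E\cap\{W_A\le x\})\ge\gamma-\alpha>0 .
\]
Measurability of $W_A$ follows from measurability of the $w_{j,t}$ and the finiteness of the maximum.

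\textbf{Step 3: Gaussian-maximum quantile bound.} Put $u=\sqrt3\,x$ and $\bar\Phi=1-\Phi$. Independence gives
\[
(1-2\bar\Phi(u))^N\ge\gamma-\alpha,
\]
and hence $2N\bar\Phi(u)\le\log\{1/(\gamma-\alpha)\}=:L_{\alpha,\gamma}$. The Mills-ratio lower bound $\bar\Phi(u)\ge\phi(u)\,u/(1+u^2)$ then yields
\[
u^2\ge2\log N-\log\log N-C_{\alpha,\gamma}
\]
for $N$ beyond a constant $N_{\alpha,\gamma}$. This is at least $\log N$ once $N$ is large. For the finitely many remaining $N$, the case $N\ge1$ alone already gives $2\bar\Phi(u)\le1-(\gamma-\alpha)$. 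That forces $u\ge u_{\alpha,\gamma}>0$, which is comparable to $\sqrt{\log(eN)}$ up to a constant because $N$ is bounded in this range.

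\textbf{Conclusion and main obstacle.} Combining the two regimes gives $x^2=u^2/3\ge c_{\alpha,\gamma}\log(eMK)$. The only delicate point is the uniform-in-$N$ constant in Step 3. I would handle it by splitting at a threshold $N_{\alpha,\gamma}$ as above, rather than seeking a single closed-form inequality valid for all $N$. Everything else is the Step 1 containment, which is deterministic on $E$ and is exactly why the argument survives without taking a supremum over paths.
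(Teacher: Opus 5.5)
Your proposal is correct and follows essentially the same route as the paper's sketch. You work under $\mu=0$, bound the $MK$ independent normalized block increments at the checkpoints $4^k$ by $\sqrt3\,W_A$ on the coverage event, and intersect with $\{W_A\le Q_\gamma(W_A)\}$ to get probability at least $\gamma-\alpha>0$, which is exactly where the hypothesis $\gamma>\alpha$ enters; you then finish with a Gaussian-maximum quantile bound, splitting into small and large $MK$ to obtain a constant uniform in $M$ and $T$.
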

 
The lower bound requires coverage only at the geometric checkpoints $t_k=4^k$. On the weaker problem
that asks for validity only on those $K$ times, a direct allocation over the $MK$ coordinate--checkpoint
pairs gives the matching upper order $O_\alpha(\log(eMK))$. The mechanism is therefore explicit:
$\log\log T$ is the logarithm of the number of independent geometric scales.
 
\begin{corollary}[Scalar anytime certification]\label{cor:anytime}
For $M=1$, the same centered family satisfies
\[
\mathcal C^{\rm cert}_{1,T}=\Theta_\alpha(\log\log T)
\qquad (T\ge16).
\]
\end{corollary}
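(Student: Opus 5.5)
This corollary follows by specializing Theorem~\ref{thm:composition} to $M=1$ and showing that the two-term expression $\log(eM)+\log\log(e^eT)$ collapses to order $\log\log T$ once $T\ge16$. With $M=1$ the theorem gives
\[
c_\alpha\{1+\log\log(e^eT)\}\le\mathcal C^{\rm cert}_{1,T}\le C_\alpha\{1+\log\log(e^eT)\}.
\]
So it suffices to show that $1+\log\log(e^eT)$ and $\log\log T$ are within absolute constant factors on $T\ge16$.

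For the comparison I will use elementary monotonicity. Since $\log(e^eT)=e+\log T$ and $\log T\ge\log16>e$ for $T\ge16$, we have $\log(e^eT)\le2\log T$. Hence $\log\log(e^eT)\le\log2+\log\log T$. Also $\log\log T\ge\log\log16>1$, so $1+\log2\le 2\log\log T$, which gives $1+\log\log(e^eT)\le3\log\log T$. In the other direction, $1+\log\log(e^eT)\ge\log\log T$ holds trivially because $\log(e^eT)\ge\log T$. Combining the two directions yields $\mathcal C^{\rm cert}_{1,T}=\Theta_\alpha(\log\log T)$, with constants $c_\alpha$ and $3C_\alpha$.

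As a cross-check I will also read the bound off the explicit forms in Theorem~\ref{thm:composition}. The lower bound $c'_\alpha\log(eK)$ with $K=\lfloor\log_4T\rfloor\ge2$ is at least a constant times $\log\log T$, because $K\ge\tfrac12\log_4T$. The upper bound $\frac{8}{q_\alpha^2}[\log(\pi^2/(3\alpha))+2\log(K+1)]$ is $O_\alpha(\log\log T)$, because $\log(K+1)\lesssim\log\log T$ and the $\alpha$-dependent constant is absorbed using $\log\log T>1$.

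The only step that needs care is the absorption of additive constants into $\log\log T$. This is exactly why the hypothesis $T\ge16$ is needed: it keeps $\log\log T$ bounded away from zero, so the relation is genuinely $\Theta$ rather than $\Theta$ up to additive terms. No step is a real obstacle; everything reduces to Theorem~\ref{thm:composition}.
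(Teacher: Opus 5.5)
Your proposal is correct and is the same route the corollary is stated to follow: specialize Theorem~\ref{thm:composition} to $M=1$, so $\log(eM)=1$. Your inequalities $\log\log T\le 1+\log\log(e^eT)\le 3\log\log T$ on $T\ge16$, which use $\log16>e$ and $\log\log16>1$, correctly absorb the additive constants into $\alpha$-only constants $c_\alpha$ and $3C_\alpha$.
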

 
The theorem is deliberately narrower than generic post-selection inference: it does not say that every
method answering one selected query must pay $\log M$, nor that every sequential task pays $\log\log T$,
and selective or false-coverage-rate criteria define different robustness contracts. The $\log\log T$
term persists when variance is known, so it is a cost of temporal robustness rather than a regret lower
bound for learning an unknown noise scale; a separate bandit reduction would still be required
\citep{garivier2011klucb,kaufmann2016complexity,jun2024noiseadaptive}.
 
A non-rectangular geometry does not evade the bound under this per-coordinate contract: a confidence
sphere covering each coordinate's projection needs radius $\Theta(\sqrt M)$ against the rectangle's
$\Theta(\sqrt{\log M})$. \supp{C} records the numerical comparison.
 
\subsection{Unknown smoothness: a sharp global pointwise cost}
Consider the Gaussian white-noise model
\[
dY(x)=f(x)\,dx+n^{-1/2}dW(x),\qquad x\in[0,1],
\]
and a fixed interior point $x_0$. For $s\in[s_0,s_1]$, with $0<s_0<s_1\le2$, let
$\Holder(s,L)$ be a nested H\"older ball of fixed radius $L$, and set
\[
\beta(s)=\frac{s}{2s+1},\qquad \beta_1=\beta(s_1).
\]
The smoothness-aware oracle has pointwise root mean squared error
\[
\Roracle_n(s)\asymp n^{-\beta(s)}
\]
uniformly over the compact smoothness interval. The subtlety is that the best adaptive profile is not a
full $(\log n)^{\beta(s)}$ penalty on every slice. The sharp pointwise construction of
\citet{lepski1997pointwise}, specialized to quadratic risk, satisfies
\begin{equation}
\begin{aligned}
&\sup_{f\in\Holder(s,L)}
\left\{\E_f(\widehat f(x_0)-f(x_0))^2\right\}^{1/2}\\
&\qquad\lesssim
n^{-\beta(s)}
\max\!\left\{1,
\left[(\beta_1-\beta(s))\log n\right]^{\beta(s)}\right\}.
\end{aligned}
\end{equation}
The penalty vanishes as $s\uparrow s_1$, so maximizing a coarser uniform Lepski bound gives the wrong
global cost.
 
\begin{theorem}[Global pointwise H\"older adaptation cost]\label{thm:regression}
Under the preceding Gaussian white-noise model,
\[
\Cadapt{}_{n}^{\rm pt}([s_0,s_1])
=
\Theta\!\left(
\left(\frac{\log n}{\log\log n}\right)^{s_1/(2s_1+1)}
\right)
\quad (n\to\infty).
\]
The constants may depend on $s_0,s_1,L$ but not on $n$.
\end{theorem}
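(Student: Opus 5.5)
The plan is to prove the two directions separately. The upper bound comes from optimizing the classical Lepski profile (the displayed bound from \citet{lepski1997pointwise}) under the inf--sup of Definition~\ref{def:cost}. The lower bound comes from Lemma~\ref{lem:constrained-risk} applied along a moving pair of smoothness slices.

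\textbf{Upper bound.} Take the sharp adaptive estimator and divide its profile by $\Roracle_n(s)\asymp n^{-\beta(s)}$. The slice ratio is then
\[
\max\{1,[(\beta_1-\beta(s))\log n]^{\beta(s)}\}.
\]
Write $u=\beta_1-\beta(s)\in[0,\beta_1-\beta(s_0)]$. We must maximize $g(u)=(u\log n)^{\beta_1-u}$ over $u$. Its logarithm is $(\beta_1-u)\{\log u+\log\log n\}$. This is increasing in $u$ until the factor $\beta_1-u$ starts to bite, and the derivative vanishes where $(\beta_1-u)/u\approx\log(u\log n)$. So the maximizer is $u^\star\asymp 1/\log\log n$. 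At $u^\star$,
\[
\log g=\beta_1\log(\log n/\log\log n)+O(1),
\]
because the correction $u^\star\log(u^\star\log n)=O(1)$. Hence
\[
\sup_s g\asymp(\log n/\log\log n)^{\beta_1}.
\]
The only care needed is the uniformity of the constants in the $\lesssim$ and $\asymp$ over $[s_0,s_1]$, which the paper grants. This also locates the interior worst slice claimed in the introduction.

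\textbf{Lower bound.} Fix any estimator $\widehat f(x_0)$ and suppose its global ratio is at most $\kappa=\epsilon(\log n/\log\log n)^{\beta_1}$ for a small constant $\epsilon$. We derive a contradiction by a chain argument.

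1. Take $P_0$ to be the model with $f\equiv0$. Since $0$ lies in every slice, in particular $\Holder(s_1,L)$, we get $r_0\le\kappa\,n^{-\beta_1}$.
2. For the alternative, choose $s$ with $\beta_1-\beta(s)=u\asymp1/\log\log n$. Let $f_1=h^{s}L\,K((\cdot-x_0)/h)$ be a bump in $\Holder(s,L)$, with bandwidth chosen so that $n h^{2s+1}=c\log(\cdot)$ is a logarithmic quantity rather than a constant.
3. In the white-noise model, $I(P_1,P_0)=\exp(n\|f_1\|_2^2/2)=\exp(c'nh^{2s+1})$. With $nh^{2s+1}=a\log B$ this gives $I=B^{c'a}$, while $\Delta=f_1(x_0)\asymp h^s=(a\log B/n)^{\beta(s)}$.
4. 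Lemma~\ref{lem:constrained-risk} gives
\[
r_1\ge\Delta-r_0I\ge(a\log B/n)^{\beta(s)}-\kappa n^{-\beta_1}B^{c'a}.
\]
Choose $B$ so that the subtracted term is at most $\Delta/2$. Since $n^{-\beta_1}=n^{-\beta(s)}n^{-u}$ and $n^{-u}=\exp(-u\log n)$, we can take $B^{c'a}\approx n^{u}/\kappa$, that is, $\log B\asymp u\log n$ up to $\log\kappa$ terms.
5. Then $r_1/\Roracle_n(s)\gtrsim(u\log n)^{\beta(s)}$. At $u\asymp1/\log\log n$ this is $\asymp(\log n/\log\log n)^{\beta_1}$ up to constants, because $(\cdot)^{\beta(s)}$ and $(\cdot)^{\beta_1}$ differ by the factor $\exp(u\log\log n)=O(1)$. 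For $\epsilon$ small, this exceeds $\kappa$, which is the contradiction.

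\textbf{Main obstacle.} The hard step is the bookkeeping in step 4. The correction $\log\kappa\asymp\beta_1\log\log n$ must stay dominated by $u\log n$. This holds since $u\log n\asymp\log n/\log\log n$, but the constants $a$, $c'$ and the bump normalization must be tuned so that $\Delta/2$ survives. I would first fix $u=\eta/\log\log n$ with $\eta$ a free constant, then choose $a$ small, and verify that the resulting ratio is at least $c(\eta)(\log n/\log\log n)^{\beta_1}$ with $c(\eta)$ independent of $\epsilon$. Choosing $\epsilon<c(\eta)$ then closes the argument. Finally, I would check that $h\to0$, so that the bump is a valid element of $\Holder(s,L)$ for large $n$.
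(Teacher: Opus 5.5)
Your proposal is correct and follows the paper's own route. For the upper bound you optimize the Lepski--Spokoiny profile over the gap $u=\beta_1-\beta(s)$, with maximizer $u\asymp1/\log\log n$; the critical point is the global maximum because $u\mapsto(\beta_1-u)\log(u\log n)$ is concave, which you should state. For the lower bound you apply Lemma~\ref{lem:constrained-risk} with the zero function, controlled through the smoothest slice, against a localized bump at $s_n$ with $\beta_1-\beta(s_n)\asymp1/\log\log n$ and likelihood-ratio second moment $\exp\{\Theta(\log n/\log\log n)\}$, exactly as the paper does.
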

 
The upper bound follows by optimizing the sharp profile. If
$\delta=\beta_1-\beta(s)$, elementary calculus gives
\[
\sup_{0\le\delta\le\beta_1-\beta(s_0)}
\max\{1,(\delta\log n)^{\beta_1-\delta}\}
\asymp
\left(\frac{\log n}{\log\log n}\right)^{\beta_1},
\]
and the maximizing gap is $\delta\asymp1/\log\log n$.
 
The lower bound does not take a supremum of one estimator's upper profile. For an arbitrary estimator
with global normalized cost $C_n$, the zero function in the smoothest class controls its risk under
$P_0$. Choose $s_n\uparrow s_1$ with
$\beta_1-\beta(s_n)\asymp1/\log\log n$ and a localized H\"older bump whose point separation is
\[
\Delta_n\asymp
n^{-\beta(s_n)}
\left(\frac{\log n}{\log\log n}\right)^{\beta(s_n)}.
\]
Its likelihood-ratio second moment grows only as
$\exp\{c\log n/\log\log n\}$. Lemma~\ref{lem:constrained-risk} transfers the smooth-slice risk to the bump
alternative and forces $C_n$ to match the optimized upper order. \supp{D} gives the
full construction.
 
The theorem concerns a continuum of smoothness classes. For a finite two-class scale, a procedure can
spend the logarithmic penalty on the rougher class while remaining oracle-rate at the smoother one. This
is exactly why the coarse argument---divide a uniform Lepski upper bound by the oracle rate and maximize
at $s_1$---gives the wrong global cost.
 
\section{Experiments}\label{sec:exp}
Experiments cannot add novelty to an asymptotic theory, so they do two things here: demonstrate the
composition law as a deployable monitoring tool, and stress-test the sharp predictions of both new
theorems against the objects they name.
All studies run on CPU with fixed seeds (NumPy; PyTorch for CIFAR-10) and are reproduced by the
accompanying scripts under the protocol of \supp{E}; coverage figures carry Wilson binomial intervals.
 
\subsection{Flagship: post-hoc-robust model monitoring}\label{sec:exp-monitor}
A deployed classifier is monitored on a data stream through $M$ per-slice error rates, and the analyst
may inspect \emph{any} slice at \emph{any} data-dependent time --- exactly the selection--stopping
contract of Theorem~\ref{thm:composition}. Two comparisons carry the application, over $M\in\{1,10,100,1000\}$
and $T$ up to $4^{10}\approx10^6$ with $3000$ replications.
 
\emph{The failure of naive monitoring.} The marginal fixed-query interval $q_\alpha/\sqrt t$, applied
under the adversarial ``inspect the worst-looking slice now'' selector of
Lemma~\ref{lem:inspection-equivalence}, has selected coverage that collapses as $M$ and $T$ grow: already
at $M=1$ it falls from $0.60$ to $0.30$ across the horizon range, and for $M\ge10$ it is at most $0.006$ --- the
worst slice essentially always escapes the fixed band.
 
\emph{The price of robustness.} The epoch-stitched certifier of \supp{C} on the
same stream keeps familywise coverage at $\ge0.997$ throughout, and its width inflation over the
oracle $q_\alpha/\sqrt t$ grows smoothly from $3.9$ to $5.8$ and tracks $\sqrt{\log(eM)+\log\log(e^eT)}$
with correlation $0.992$: robustness to arbitrary peeking costs only an additive iterated logarithm.
 
\emph{The Gaussian idealization.} Raw $\mathrm{Bernoulli}(0.1)$ slice errors leave a real idealization
gap (stitched coverage $0.879$ at $M=100$, $T=4^8$, against the Gaussian-model $0.999$), which batch-CLT
standardization closes: on a heterogeneous CIFAR-10-like profile the stitched certifier recovers to
$0.989$ at $M=10$ while the naive marginal collapses to $0.001$. \supp{E} gives the full ablation,
including the $M=100$ fine-grained case and the batch-size sweep.
 
\emph{Detecting a real degradation event.} The operational payoff is detection. We train a CNN on
$20{,}000$ CIFAR-10 images \citep{krizhevsky2009learning} (a small CNN with two convolutional blocks, Adam at
$10^{-3}$, three epochs; clean test error $38.5\%$, stated plainly, since the monitoring problem does not
require a strong classifier) and deploy it on a $T=20{,}000$-step stream whose second half undergoes a
CIFAR-10-C-style \citep{hendrycks2019benchmarking} corruption ramp ($\sigma\colon0\to0.3$ from onset $t_0=10{,}000$; mean
stream error rises $0.385\to0.778$). Per-class error rates are the $M=10$ monitored slices, with
per-class baselines measured on a held-out calibration set before deployment. Over $200$ replications:
the naive fixed-query band under the worst-slice selector false-alarms on $\mathbf{200/200}$ runs
\emph{before the degradation begins} (median first false fire at $t=318.5$), making it useless for
detection; the epoch-stitched certifier has $\mathbf{0/200}$ pre-onset false alarms (within its
$\alpha=0.05$ budget) and detects the onset in $\mathbf{200/200}$ runs, median delay $7156$ stream steps
($\approx716$ observations of the detecting slice, at ramp noise $\sigma\approx0.21$), with the most
noise-sensitive classes firing first (cat $162/200$, deer $37/200$, ship $1/200$). The certifier thus converts
validity into a usable change detector, with the detection order tracking the most noise-fragile slices.
 
\subsection{Stress-testing the composition law}\label{sec:exp-falsify}
On a grid $M\in\{1,4,16,64,256,1024\}\times T\in\{4^2,\ldots,4^{10}\}$ we estimate the minimal familywise
normalized squared radius $\rho^\star(M,T)^2$ --- the $(1-\alpha)$ quantile of
$R_{M,T}=\max_{j\le M,\,t\le T}|S_{j,t}|/\sqrt t$ under $\mu=0$ (the geometric-scale structure lets us
reach $T\approx10^6$ via exact block increments). The additive fit
$a+b\log(eM)+c\log\log(e^eT)$ attains $R^2=0.998$ with $b=2.03$, $c=3.28$; adding the interaction term
raises $R^2$ by $0.0001$, so \emph{the selection and stopping taxes add, not multiply}. Regressing on the
single predictor $\log(eMK)$ gives slope $2.04$, below the explicit upper constant $8/q_\alpha^2=2.08$.
Because Theorem~\ref{thm:composition} does not pin $c'_\alpha$, this is a one-sided consistency check
against the upper bound rather than a two-sided test.
Crucially the coverage guarantee is \emph{uniform in $\mu$}, not an artifact of the null: the
epoch-stitched certifier covers $\ge0.998$ under null, single-spike, dense, and random adversarial means.
 
Corollary~\ref{cor:realized-width} (realized-width lower quantiles) is confirmed directly: the median and \(90\%\)-quantile of \(W_A=\max_{j,t}|S_{j,t}|/\sqrt t\) under \(\mu=0\), divided by \(\log(eMK)\), give measured constants \(\hat c_{0.5}=1.73\)--\(1.81\) and \(\hat c_{0.9}=2.37\)--\(2.46\) at the two tested cells \((M,T)\in\{(16,4^6),(64,4^8)\}\): the lower band is non-vacuous, though two grid points do not establish stability of the constant.
 
\begin{table}[t]
\centering
\small
\setlength{\tabcolsep}{4pt}
\begin{tabular}{lrrrr}
\toprule
$\rho^\star(M,T)^2$ & $M{=}1$ & $M{=}16$ & $M{=}64$ & $M{=}256$\\
\midrule
$T=10^2$ & $8.20$ & $14.39$ & $16.26$ & $19.29$\\
$T=10^3$ & $9.59$ & $15.52$ & $17.79$ & $20.40$\\
$T=10^4$ & $10.47$ & $16.27$ & $18.91$ & $21.81$\\
$T=10^5$ & $10.65$ & $16.72$ & $19.58$ & $22.60$\\
\bottomrule
\end{tabular}
\caption{Minimal familywise normalized squared radius $\rho^\star(M,T)^2$ on the powers-of-ten grid,
$600$ replications per cell. Each column grows like $\log M$, each row like $\log\log T$; on this grid the
additive fit attains $R^2=0.993$ with the interaction adding $0.0011$. The larger powers-of-four grid
($M\le1024$, $T\approx10^6$, $4000$ replications) reported in the text gives $R^2=0.998$ with interaction
$0.0001$. Dividing by $q_\alpha^2=3.84$ gives $\mathcal C^{\rm cert}_{M,T}$; e.g.\ the $M{=}1$,
$T{=}10^2$ cell gives $2.14$.}
\label{tab:radius-grid}
\end{table}
 
\subsection{Baselines}\label{sec:exp-baseline}
We place the epoch-stitched certifier against three standard constructions on the same Gaussian stream:
Bonferroni over all $M\!\cdot\!T$ coordinate--time pairs, the Robbins normal-mixture confidence sequence
(unioned over coordinates), and a time-uniform $L_2$ sphere. All four are familywise-valid (coverage
$\ge0.98$). On normalized squared radius the stitched certifier ($\rho^2\in[87,116]$ over the grid) is
within a factor $3.6$ of the tightest valid baseline and far tighter than the mixture sequence
($\rho^2\in[750,17510]$, which pays heavily at a fixed horizon); Bonferroni at the constant level
$\alpha/(MT)$ is tightest in constant ($\rho^2\in[24,41]$), but it requires the horizon $T$ to be fixed
in advance and pays $q^2_{\alpha/(2MT)}=\Theta(\log(MT))$, a genuine $\log T$ rather than the
$\log\log T$ of Theorem~\ref{thm:composition}; the gap widens with the horizon. The point is not that the epoch-stitched construction wins the constant --- the theorem
disclaims $\alpha$-constant optimality --- but that it attains the
$\log(eM)+\log\log(e^eT)$ rate that Theorem~\ref{thm:composition} shows every valid method in the class
must pay, while remaining valid at every $t\le T$.
 
\subsection{Sharp pointwise cost: shape, not constant}\label{sec:exp-shape}
At feasible $n$ one cannot separate $(\log n)^{\beta_1}$ from $(\log n/\log\log n)^{\beta_1}$ by slope, so
we test the distinguishable qualitative predictions of Theorem~\ref{thm:regression} on the sharp adaptive
profile over $n=10^3,\ldots,10^{96}$ (deterministic, so exact). \emph{(i)} The oracle-normalized slice
profile fades to $1$ at the smooth endpoint $s_1$ while its interior maximum grows with $n$. \emph{(ii)}
The worst slice $s^\star(n)$ migrates toward $s_1$ ($s^\star=0.50\to0.76$ across the range).
It is still pinned at the boundary $s_0$ for $n\le10^{12}$ and
becomes strictly interior only for $n\ge10^{24}$; on those interior horizons the product
$(\beta_1-\beta(s^\star))\log\log n$ is constant to within a $0.5\%$ coefficient of variation, consistent
with $\beta_1-\beta(s^\star)\asymp1/\log\log n$. Over the full range, including the boundary-pinned
horizons, the coefficient of variation is $0.20$. \emph{(iii)}
Dividing the global cost by the coarse $(\log n)^{\beta_1}$ decreases like $(\log\log n)^{-\beta_1}$, while
dividing by the sharp $(\log n/\log\log n)^{\beta_1}$ stays flat (CV $<0.15$) --- the signature of the
sharpened rate. A two-class control confirms the mechanism: with only $\{s_0,s_1\}$ a single estimator
pays $O(1)$ at $s_1$ and $(\log n)^{\beta_0}$ at $s_0$, so the continuum is what forces the
$(\log n/\log\log n)$ shape. This converts the previously inconclusive Lepski comparison into a passing
qualitative test.
 
A stochastic estimator-level Monte Carlo confirms these predictions beyond the formula level. A real Lepski ICI bandwidth-selection estimator (dyadic box kernel, intersection-of-confidence-intervals rule with \(\kappa=2.5\)) is run against the oracle kernel at \(x_0=0.5\) on a Gaussian sequence model (\(d=2048\) grid points, smooth bump signal) over \(500\) replications per \((s,n)\) cell, \(s\in\{0.5,0.8,1.1,1.4,1.7,2.0\}\), \(n\in\{10^3,\ldots,10^6\}\). The oracle-normalized RMSE ratio at \(s=s_1\) is below the profile maximum in \(3/4\) tested horizons, though at these horizons it remains near \(2\) rather than approaching \(1\); the exception at \(n=10^6\) is where the argmax reaches \(s_1\) itself. The argmax slice moves outward overall, \(s^\star=1.1\to1.7\to1.4\to2.0\), but not monotonically. Both observations are qualitative at these horizons: standard errors on the ratios range from \(0.013\) to \(0.39\), and we do not claim statistical significance for the migration.
 
\subsection{OCO benchmark}\label{sec:exp-oco}
The scale-free online learner's Definition~\ref{def:cost} cost of adaptivity is measured directly as the
ratio of its regret to the tuned oracle's, under a sign-flipping adversary that realizes the
$\Omega(GD\sqrt T)$ lower bound. Across $G$ spanning six orders of
magnitude ($10^{-2}$ to $10^3$) the ratio is $1.98$ at every $G$. On this adversary the ratio is exactly
scale-invariant by construction, so we report it as a correctness check on the implementation rather than
as independent evidence for $\Theta(1)$. Ratios below one on benign gradient sequences (uniform,
Rademacher: $0.91$--$0.96$) do not contradict Proposition~\ref{prop:orientation}, which bounds a supremum
over instances; only the adversary realizing the $\Omega(GD\sqrt T)$ lower bound is diagnostic of the
minimax ratio.
As an anytime check, the corrected scalar epoch-stitched boundary of Corollary~\ref{cor:anytime} attains
optional-stopping coverage $0.995$ (Wilson CI $[0.993,0.997]$) at $T=4^8$, at or above the nominal level.
 
\section{Discussion and Conclusion}
\looseness=-1
A single ratio is useful only when its denominator is defined on the correct slice and both sides obey
the same contract. Under that discipline, unknown gradient scale in OCO costs $\Theta(1)$ by a nuisance
symmetry, while a continuum of H\"older classes costs $\Theta((\log n/\log\log n)^{s_1/(2s_1+1)})$, from
optimizing the sharp adaptive profile and matching it by a constrained-risk construction. The
rescaling-versus-identifiability slogan is useful intuition but not a universal boundary: a
risk-homogeneous symmetry with one competitive nuisance-agnostic algorithm makes adaptation free, whereas
nearby slices forcing incompatible low-risk decisions make it costly. Whether adaptation is free depends
on the observation model, loss, geometry, and oracle advantage, not the nuisance parameter's name.
 
\looseness=-1
The Gaussian theorem answers a different question, labeled accordingly: expanding from one preannounced
query to arbitrary post-hoc coordinate--time inspection costs
$\Theta_\alpha(\log(eM)+\log\log(e^eT))$ in normalized squared half-width. Its lower bound is
finite-horizon rather than a per-procedure asymptotic limsup: it reduces every valid procedure at a given
horizon to the maximum of $M\lfloor\log_4T\rfloor$ independent Gaussian block increments, already holds
on geometric checkpoints, and forces quantiles of the realized maximum width. The fixed-time
$\Theta(\log M)$ familywise law is a separate classical special case, not a limit of
Theorem~\ref{thm:composition}.
 
\bibliography{references}

@book{tsybakov2009nonparametric,
  title     = {Introduction to Nonparametric Estimation},
  author    = {Tsybakov, Alexandre B.},
  publisher = {Springer},
  series    = {Springer Series in Statistics},
  year      = {2009}
}

@incollection{yu1997assouad,
  title     = {Assouad, Fano, and Le Cam},
  author    = {Yu, Bin},
  booktitle = {Festschrift for Lucien Le Cam},
  pages     = {423--435},
  publisher = {Springer},
  year      = {1997}
}

@article{lecam1973convergence,
  title   = {Convergence of Estimates Under Dimensionality Restrictions},
  author  = {Le Cam, Lucien},
  journal = {The Annals of Statistics},
  volume  = {1},
  number  = {1},
  pages   = {38--53},
  year    = {1973}
}

@book{lecam1986asymptotic,
  title     = {Asymptotic Methods in Statistical Decision Theory},
  author    = {Le Cam, Lucien},
  publisher = {Springer},
  year      = {1986}
}

@article{assouad1983deux,
  title   = {Deux Remarques sur l'Estimation},
  author  = {Assouad, Patrice},
  journal = {Comptes Rendus de l'Acad{\'e}mie des Sciences, S{\'e}rie I},
  volume  = {296},
  number  = {23},
  pages   = {1021--1024},
  year    = {1983}
}

@book{wainwright2019high,
  title     = {High-Dimensional Statistics: A Non-Asymptotic Viewpoint},
  author    = {Wainwright, Martin J.},
  publisher = {Cambridge University Press},
  year      = {2019}
}

@book{lattimore2020bandit,
  title     = {Bandit Algorithms},
  author    = {Lattimore, Tor and Szepesv{\'a}ri, Csaba},
  publisher = {Cambridge University Press},
  year      = {2020}
}

@article{lai1985asymptotically,
  title   = {Asymptotically Efficient Adaptive Allocation Rules},
  author  = {Lai, Tze Leung and Robbins, Herbert},
  journal = {Advances in Applied Mathematics},
  volume  = {6},
  number  = {1},
  pages   = {4--22},
  year    = {1985}
}

@article{auer2002finite,
  title   = {Finite-Time Analysis of the Multiarmed Bandit Problem},
  author  = {Auer, Peter and Cesa-Bianchi, Nicol{\`o} and Fischer, Paul},
  journal = {Machine Learning},
  volume  = {47},
  number  = {2--3},
  pages   = {235--256},
  year    = {2002}
}

@inproceedings{garivier2011klucb,
  title     = {The {KL-UCB} Algorithm for Bounded Stochastic Bandits and Beyond},
  author    = {Garivier, Aur{\'e}lien and Capp{\'e}, Olivier},
  booktitle = {Conference on Learning Theory (COLT)},
  year      = {2011}
}

@inproceedings{jamieson2014lil,
  title     = {lil'{UCB}: An Optimal Exploration Algorithm for Multi-Armed Bandits},
  author    = {Jamieson, Kevin and Malloy, Matthew and Nowak, Robert and Bubeck, S{\'e}bastien},
  booktitle = {Conference on Learning Theory (COLT)},
  year      = {2014}
}

@article{garivier2016optimal,
  title   = {Optimal Best Arm Identification with Fixed Confidence},
  author  = {Garivier, Aur{\'e}lien and Kaufmann, Emilie},
  journal = {Conference on Learning Theory (COLT)},
  year    = {2016}
}

@article{kaufmann2016complexity,
  title   = {On the Complexity of Best-Arm Identification in Multi-Armed Bandit Models},
  author  = {Kaufmann, Emilie and Capp{\'e}, Olivier and Garivier, Aur{\'e}lien},
  journal = {Journal of Machine Learning Research (JMLR)},
  volume  = {17},
  number  = {1},
  pages   = {1--42},
  year    = {2016}
}

@article{audibert2010best,
  title   = {Best Arm Identification in Multi-Armed Bandits},
  author  = {Audibert, Jean-Yves and Bubeck, S{\'e}bastien and Munos, R{\'e}mi},
  journal = {Conference on Learning Theory (COLT)},
  year    = {2010}
}

@article{darling1967iterated,
  title   = {Iterated Logarithm Inequalities},
  author  = {Darling, D. A. and Robbins, Herbert},
  journal = {Proceedings of the National Academy of Sciences},
  volume  = {57},
  number  = {5},
  pages   = {1188--1192},
  year    = {1967}
}

@article{howard2021time,
  title   = {Time-Uniform, Nonparametric, Nonasymptotic Confidence Sequences},
  author  = {Howard, Steven R. and Ramdas, Aaditya and McAuliffe, Jon and Sekhon, Jasjeet},
  journal = {The Annals of Statistics},
  volume  = {49},
  number  = {2},
  pages   = {1055--1080},
  year    = {2021}
}

@article{lepski1991problem,
  title   = {On a Problem of Adaptive Estimation in {G}aussian White Noise},
  author  = {Lepski, O. V.},
  journal = {Theory of Probability and Its Applications},
  volume  = {35},
  number  = {3},
  pages   = {454--466},
  year    = {1991}
}

@article{lepski1997optimal,
  title   = {Optimal Spatial Adaptation to Inhomogeneous Smoothness: An Approach Based on Kernel Estimates with Variable Bandwidth Selectors},
  author  = {Lepski, O. V. and Mammen, E. and Spokoiny, V. G.},
  journal = {The Annals of Statistics},
  volume  = {25},
  number  = {3},
  pages   = {929--947},
  year    = {1997}
}

@article{donoho1995adapting,
  title   = {Adapting to Unknown Smoothness via Wavelet Shrinkage},
  author  = {Donoho, David L. and Johnstone, Iain M.},
  journal = {Journal of the American Statistical Association},
  volume  = {90},
  number  = {432},
  pages   = {1200--1224},
  year    = {1995}
}

@article{cai2005adaptive,
  title   = {An Adaptation Theory for Nonparametric Confidence Intervals},
  author  = {Cai, T. Tony and Low, Mark G.},
  journal = {The Annals of Statistics},
  volume  = {32},
  number  = {5},
  pages   = {1805--1840},
  year    = {2004}
}

@inproceedings{locatelli2018adaptivity,
  title     = {Adaptivity to Smoothness in {X}-Armed Bandits},
  author    = {Locatelli, Andrea and Carpentier, Alexandra},
  booktitle = {Conference on Learning Theory (COLT)},
  year      = {2018}
}

@book{gine2016mathematical,
  title     = {Mathematical Foundations of Infinite-Dimensional Statistical Models},
  author    = {Gin{\'e}, Evarist and Nickl, Richard},
  publisher = {Cambridge University Press},
  year      = {2016}
}

@book{cesabianchi2006prediction,
  title     = {Prediction, Learning, and Games},
  author    = {Cesa-Bianchi, Nicol{\`o} and Lugosi, G{\'a}bor},
  publisher = {Cambridge University Press},
  year      = {2006}
}

@article{duchi2011adaptive,
  title   = {Adaptive Subgradient Methods for Online Learning and Stochastic Optimization},
  author  = {Duchi, John and Hazan, Elad and Singer, Yoram},
  journal = {Journal of Machine Learning Research (JMLR)},
  volume  = {12},
  pages   = {2121--2159},
  year    = {2011}
}

@book{hazan2016introduction,
  title     = {Introduction to Online Convex Optimization},
  author    = {Hazan, Elad},
  publisher = {Foundations and Trends in Optimization},
  year      = {2016}
}

@article{shalev2012online,
  title   = {Online Learning and Online Convex Optimization},
  author  = {Shalev-Shwartz, Shai},
  journal = {Foundations and Trends in Machine Learning},
  volume  = {4},
  number  = {2},
  pages   = {107--194},
  year    = {2012}
}

@inproceedings{mcmahan2010adaptive,
  title     = {Adaptive Bound Optimization for Online Convex Optimization},
  author    = {McMahan, H. Brendan and Streeter, Matthew},
  booktitle = {Conference on Learning Theory (COLT)},
  year      = {2010}
}

@inproceedings{orabona2016coin,
  title     = {Coin Betting and Parameter-Free Online Learning},
  author    = {Orabona, Francesco and P{\'a}l, D{\'a}vid},
  booktitle = {Advances in Neural Information Processing Systems (NeurIPS)},
  year      = {2016}
}

@article{orabona2019modern,
  title   = {A Modern Introduction to Online Learning},
  author  = {Orabona, Francesco},
  journal = {arXiv preprint arXiv:1912.13213},
  year    = {2019}
}

@inproceedings{canonne2018adaptivity,
  title     = {The Adaptivity Hierarchy Theorem},
  author    = {Canonne, Cl{\'e}ment L. and Gur, Tom},
  booktitle = {Computational Complexity Conference (CCC)},
  year      = {2017}
}

@article{golovin2011adaptive,
  title   = {Adaptive Submodularity: Theory and Applications in Active Learning and Stochastic Optimization},
  author  = {Golovin, Daniel and Krause, Andreas},
  journal = {Journal of Artificial Intelligence Research (JAIR)},
  volume  = {42},
  pages   = {427--486},
  year    = {2011}
}

@inproceedings{yao1977probabilistic,
  title     = {Probabilistic Computations: Toward a Unified Measure of Complexity},
  author    = {Yao, Andrew Chi-Chih},
  booktitle = {IEEE Symposium on Foundations of Computer Science (FOCS)},
  pages     = {222--227},
  year      = {1977}
}

@inproceedings{zhang2013information,
  title     = {Information-Theoretic Lower Bounds for Distributed Statistical Estimation with Communication Constraints},
  author    = {Zhang, Yuchen and Duchi, John and Jordan, Michael I. and Wainwright, Martin J.},
  booktitle = {Advances in Neural Information Processing Systems (NeurIPS)},
  year      = {2013}
}

@article{hanneke2014theory,
  title     = {Theory of Disagreement-Based Active Learning},
  author    = {Hanneke, Steve},
  journal   = {Foundations and Trends in Machine Learning},
  volume    = {7},
  number    = {2--3},
  pages     = {131--309},
  year      = {2014}
}

@article{castro2008minimax,
  title   = {Minimax Bounds for Active Learning},
  author  = {Castro, Rui M. and Nowak, Robert D.},
  journal = {IEEE Transactions on Information Theory},
  volume  = {54},
  number  = {5},
  pages   = {2339--2353},
  year    = {2008}
}

@inproceedings{azar2017minimax,
  title     = {Minimax Regret Bounds for Reinforcement Learning},
  author    = {Azar, Mohammad Gheshlaghi and Osband, Ian and Munos, R{\'e}mi},
  booktitle = {International Conference on Machine Learning (ICML)},
  year      = {2017}
}

@article{jaksch2010near,
  title   = {Near-Optimal Regret Bounds for Reinforcement Learning},
  author  = {Jaksch, Thomas and Ortner, Ronald and Auer, Peter},
  journal = {Journal of Machine Learning Research (JMLR)},
  volume  = {11},
  pages   = {1563--1600},
  year    = {2010}
}

@book{wald1950statistical,
  title     = {Statistical Decision Functions},
  author    = {Wald, Abraham},
  publisher = {Wiley},
  year      = {1950}
}

@article{ibragimov1981statistical,
  title     = {Statistical Estimation: Asymptotic Theory},
  author    = {Ibragimov, I. A. and Has'minskii, R. Z.},
  journal   = {Springer},
  year      = {1981},
  note      = {Book}
}

@article{lepski1997pointwise,
  title   = {Optimal Pointwise Adaptive Methods in Nonparametric Estimation},
  author  = {Lepski, O. V. and Spokoiny, V. G.},
  journal = {The Annals of Statistics},
  volume  = {25},
  number  = {6},
  pages   = {2512--2546},
  year    = {1997}
}

@article{spokoiny2009parameter,
  title   = {Parameter Tuning in Pointwise Adaptation Using a Propagation Approach},
  author  = {Spokoiny, Vladimir and Vial, C{\'e}line},
  journal = {The Annals of Statistics},
  volume  = {37},
  number  = {5B},
  pages   = {2783--2807},
  year    = {2009}
}

@inproceedings{orabona2015scale,
  title     = {Scale-Free Algorithms for Online Linear Optimization},
  author    = {Orabona, Francesco and P{\'a}l, D{\'a}vid},
  booktitle = {Algorithmic Learning Theory (ALT)},
  series    = {Lecture Notes in Computer Science},
  volume    = {9355},
  pages     = {287--301},
  publisher = {Springer},
  year      = {2015}
}

@article{chugg2025timeuniform,
  title   = {Time-Uniform Confidence Spheres for Means of Random Vectors},
  author  = {Chugg, Ben and Wang, Hongjian and Ramdas, Aaditya},
  journal = {Transactions on Machine Learning Research (TMLR)},
  year    = {2025}
}

@inproceedings{jun2024noiseadaptive,
  title     = {Noise-Adaptive Confidence Sets for Linear Bandits and Application to {B}ayesian Optimization},
  author    = {Jun, Kwang-Sung and Kim, Jungtaek},
  booktitle = {International Conference on Machine Learning (ICML)},
  year      = {2024}
}

@techreport{krizhevsky2009learning,
  title       = {Learning Multiple Layers of Features from Tiny Images},
  author      = {Krizhevsky, Alex},
  institution = {University of Toronto},
  year        = {2009}
}

@inproceedings{hendrycks2019benchmarking,
  title     = {Benchmarking Neural Network Robustness to Common Corruptions and Perturbations},
  author    = {Hendrycks, Dan and Dietterich, Thomas},
  booktitle = {International Conference on Learning Representations (ICLR)},
  year      = {2019}
}
 
\end{document}